\documentclass[11pt,a4paper]{article}

\usepackage{amsmath,amssymb,amsthm}
\usepackage{booktabs}
\usepackage{hyperref}
\usepackage{enumitem}
\usepackage{algorithm,algpseudocode}
\usepackage{geometry}
\usepackage{graphicx}
\usepackage{xcolor}
\usepackage{tikz}
\usepackage{caption}
\usepackage{tabularx}
\usepackage{array}
\usetikzlibrary{positioning,shapes.geometric,arrows.meta,calc}

\hypersetup{
    colorlinks=true,
    linkcolor=blue!60!black,
    citecolor=blue!60!black,
    urlcolor=blue!60!black
}

\theoremstyle{plain}
\newtheorem{theorem}{Theorem}[section]
\newtheorem{proposition}[theorem]{Proposition}

\newtheorem{corollary}[theorem]{Corollary}

\theoremstyle{definition}
\newtheorem{definition}[theorem]{Definition}

\theoremstyle{remark}
\newtheorem{remark}[theorem]{Remark}

\theoremstyle{plain}

\title{\textbf{A Theory of Least Autonomy in AI}\\[0.5em]}

\author{%
 Christophe Parisel\thanks{Email: ch.parisel@gmail.com}
}

\date{\today}

\begin{document}

\maketitle

\begin{abstract}
Least privilege, the principle that an identity should hold only the
permissions strictly required for its task, has been a foundational
primitive of access control for decades. We argue that this principle is
insufficient for agentic AI systems, which do not merely hold permissions
but can combine, approve, and amplify them across workflows and system
boundaries.

We propose \emph{least autonomy} as an appropriate generalization and
develop a formal theory. 
First, we define a compositional blast radius d(a,b) that measures structural separation between actions in an enterprise hierarchy, 
combining an ultrametric tree with lattice-valued confidentiality, integrity, and control-context labels. 
Second, we define a directed agent influence graph G(theta). An arc from U to V requires a directed shared-resource write-to-read meeting 
or a conservative undirected agent-to-agent (A2A) communication meeting, and a meeting-conditioned influence potential at or above an externally 
selected policy threshold theta. 
A catalogue-radius profile supports calibration and audit of theta. Finally, we define a collusion predicate over graph reachability that detects 
authorization composition, decision manipulation, and cross-domain capability composition.

We provide a step-by-step design procedure, a structured comparison with
classical least privilege, and an end-to-end illustrative example on a
representative enterprise hierarchy.
\end{abstract}

\section{Introduction}
\label{sec:intro}

\subsection{The Insufficiency of Least Privilege for Agentic AI}

The principle of least privilege, formalized in the
1970s~\cite{saltzer1975protection}, instructs that every principal should hold only the permissions
necessary to accomplish its designated function. For human users and
traditional software processes, this principle has provided a durable basis
for access-control design. Its application is more difficult for agentic AI
systems, whose actions may be sequenced, delegated, and composed across
workflows and system boundaries.

An agent is not merely a passive accessor of resources. It may read an
output, transform it, write to a downstream resource, trigger a workflow,
approve a privilege elevation, or hand off a task to another agent. Each
individual permission in such a chain may appear innocuous in isolation,
while the resulting configuration creates a risk not captured by a
permission-by-permission review.

Two properties of agentic systems make this particularly important:
\begin{enumerate}[label=(\roman*)]
\item \textbf{Gatekeeping.} An agent may lack direct access to sensitive
data while retaining authority to approve or enable another principal's
access. Conventional permission reviews need not capture this
control-plane influence.
\item \textbf{Composition.} An agent's operational reach may depend not
only on its own direct permissions, but also on the actions of other
agents that it can influence.
\end{enumerate}

Our framework supports both resource-mediated meetings and direct
agent-to-agent (A2A) communication meetings. For A2A communication, the
existence of a configured communication path is treated conservatively as an
undirected meeting: even an apparently one-way delegation channel may permit
reverse influence through returned content, delegated artifacts, tool
outputs, or prompt-injection payloads.

\subsection{From Permissions to Autonomy}
We propose shifting the unit of analysis from \emph{permission} to
\emph{autonomy}. Where least privilege asks, ``What can this identity
access?'', least autonomy asks, ``What authority exposure can arise when
this identity's permissions compose with those of agents it can
influence?''

Least autonomy does not replace conventional access control. Rather, it is
a complementary design criterion for identifying 
high-impact resource-mediated or communication-mediated interactions and
multi-agent capability compositions that
a permission-by-permission review may not reveal.

We formalize autonomy through a risk structure. The compositional blast
radius $d(a,b)$ measures security distance between actions in an enterprise
hierarchy. The directed influence graph then lifts this action-level
structure to agents through meeting-witnessed relationships.

Under the paper's conservative screening model, an agent's effective
autonomy includes the actions assigned to agents reachable through those
directed influence relationships. This is a capability-exposure abstraction,
not a claim that every influence path yields command authority, provenance,
or realized execution.

\subsection{Contributions}

This paper makes three contributions:
\begin{enumerate}[label=(\arabic*)]
\item \textbf{A compositional blast radius}
(Section~\ref{sec:blastradius}) that combines an ultrametric enterprise
hierarchy with a lattice of security labels.

\item \textbf{A directed agent influence graph and collusion predicate}
(Sections~\ref{sec:influencegraph}--\ref{sec:collusion}), in which arcs
require either a directed shared-resource meeting or a conservative
undirected A2A communication meeting, together with an influence potential
above an externally selected policy threshold.
The framework includes an
auditable catalogue-calibration profile, effective autonomy, and witnesses
for authorization composition, decision manipulation, and cross-domain
composition.

\item \textbf{A design procedure and comparison framework}
(Section~\ref{sec:design}) that translates the model into guidance for
security engineers.
\end{enumerate}

\newpage
\section{Background and Related Work}
\label{sec:background}

\subsection{Authorization Safety, Least Privilege, and Separation of Duty}

Classical protection models distinguish a static authorization state from the 
more difficult question of whether a principal can acquire a right through 
permitted state transitions. 
Harrison, Ruzzo, and Ullman's protection model formalized this safety question, 
showing that right acquisition is generally difficult to decide without 
restricting the protection system~\cite{harrison1976}. 
Role-based access control (RBAC) subsequently provided a practical framework 
for organizing permissions through role--permission and user--role assignments, 
while supporting constraints such as least privilege and separation of 
duty~\cite{sandhu1998}. 
Kuhn studied mutual exclusion of roles as a mechanism for enforcing separation 
of duty and characterized conditions under which such controls are safe~\cite{kuhn1997}.

Our setting differs from traditional RBAC in that principals may be autonomous 
agents and permissions may compose through cross-organizational interactions. 
Nevertheless, the distinction between static authorization, authorization evolution, 
and task-level constraints remains essential. 

\subsection{Workflow Authorization and Policy Analysis}

Authorization constraints in workflow systems address the fact that secure 
execution often depends not only on who possesses an individual permission, 
but also on which principal performs which task, in what sequence, and 
subject to what constraints. 
Bertino, Ferrari, and Atluri formalized authorization constraints for 
workflow management systems, including constraints that restrict assignments 
and executions across workflow tasks~\cite{bertino1999}. 
This literature is particularly relevant when write and read permissions are 
intended to represent a feasible business or system workflow rather than merely co-occurring permissions.

A complementary line of work concerns formal analysis of access-control policies. 
For example, Margrave supports verification and change-impact analysis for 
access-control policies~\cite{fisler2005}. 
These approaches motivate treating the proposed criterion not only as a descriptive 
score, but as an analyzable policy property with explicit assumptions about 
policy semantics, workflow feasibility, and controls.

\subsection{Graph-Based Security Analysis}

Graph-based security analysis has long represented multi-step compromise and reachability 
through attack graphs. Ammann, Wijesekera, and Kaushik developed graph-based 
vulnerability analysis to reason about paths induced by network configuration 
and exploit conditions~\cite{ammann2002}; Sheyner et al.\ similarly studied automated 
generation and analysis of attack graphs~\cite{sheyner2002}. 
These models differ from the present approach because their edges are justified by 
explicit preconditions and state transitions.

The proposed influence graph instead aggregates permission relationships using 
resource-hierarchy distance and security labels. 
It should therefore be interpreted as a static authorization-risk abstraction unless 
each retained edge is linked to a concrete feasible write-to-read workflow, 
provenance relation, or tool-mediated transition. 
Under such a relation, the metric can rank the severity of established channels.

\subsection{Lattice-Based Information-Flow Control}

Denning's foundational work~\cite{denning1976} showed that information-flow
policies can be expressed using a lattice of security classes together with
a \emph{can-flow-to} relation. Subsequent work in language-based
security~\cite{sabelfeld2003} developed these ideas in programming-language
settings, including formal accounts of information-flow enforcement.

The present model does not claim a noninterference result or an enforcement
theorem for information flow. Rather, it uses a lattice-valued label
structure to accumulate confidentiality, integrity, and gatekeeping context
along paths in an enterprise hierarchy. The connection to lattice-based flow
control is therefore conceptual: both approaches treat security properties
as compositional rather than as attributes of isolated permissions.

\subsection{Tool-Enabled Agents and Privilege Control}

Recent work shows that language-model agents become security-critical when 
they process untrusted external content while retaining authority to invoke 
consequential tools. 
ToolEmu evaluates risks in tool-mediated language-model agents~\cite{ruan2024}, 
while AgentDojo evaluates prompt-injection attacks and defenses in realistic 
agent-tool environments~\cite{agentdojo2024}. 
More recent systems explicitly pursue least-privilege enforcement for 
tool-calling agents, including Prompt Flow Integrity~\cite{kim2025pfi}, 
Progent~\cite{shi2025progent}, and MiniScope~\cite{zhu2025miniscope}.

These systems focus primarily on runtime isolation, tool-call authorization, 
or adversarial evaluation of individual agents. In contrast, this paper 
studies a static design-time question: how permissions distributed across multiple 
enterprise agents may combine across organizational scopes and security domains. 
The contribution should therefore be framed as complementary to runtime tool 
authorization: the proposed analysis can identify permission configurations that 
merit runtime mediation, approval gates, or least-privilege redesign.

\section{The Resource Hierarchy and Blast Radius}
\label{sec:blastradius}

\subsection{Enterprise Hierarchy as an Ultrametric Tree}

We model the enterprise information system as a rooted tree
$\mathcal{T} = (N, E_T)$, where leaf nodes correspond to individual
resources and internal nodes to organizational scopes. We assign depth
$D(v) \in \mathbb{N}_0$ to each node, with $D(\mathrm{root}) = 0$.

The tree induces a natural ultrametric:
For $u,v\in N$, define
\[
\operatorname{ultra}(u,u):=0,
\]
and, for $u\neq v$, define
\[
\operatorname{ultra}(u,v)
:=
2^{-\left(2D\!\bigl(\operatorname{lca}(u,v)\bigr)+1\right)}.
\]
This satisfies the strong triangle inequality and decreases by a factor of
4 for each additional depth level of the LCA, a property exploited in
Section~\ref{sec:lattice} to bound the lattice multiplier.

\subsection{The Action Permission Model}

Each action \(a\in\mathcal A\) is an admissible action instance
\[
a=(v,\mathit{CF},\mathit{RP},\mathit{SS},\mathit{Op}),
\]
where \(v\in N\) is the associated resource node,
\(\mathit{CF}\) is the control-flow classification,
\(\mathit{RP}\) is the resource-provider class,
\(\mathit{SS}\) is the sensitivity class, and
\(\mathit{Op}\) is the operation. Define
\[
\operatorname{node}(a):=v.
\]

\begin{remark}
This model is not platform-specific. It applies to cloud resources, SaaS
platforms, CI/CD systems, approval workflows, secrets stores, and custom
applications. It captures cross-organizational influence (via resource write then read) and
gatekeeping (approval authority) and resource-mediated influence.
Direct A2A communication is represented separately as an agent-pair meeting
relation rather than as a property of an individual resource action.
\end{remark}

\subsection{The Sensitivity Lattice}
\label{sec:lattice}

\begin{definition}[Local label]
For each node $v \in N$, the \emph{local label} is
$\ell(v) = (\mathit{flow},\, \mathit{axes})$ where
\[
\mathit{flow}\in\{\emptyset,GK,Ac\},
\qquad
\emptyset<GK<Ac,
\qquad
\mathit{axes}\subseteq\{C,I\},
\]
with
\[
\mathit{flow}=\emptyset
\Longrightarrow
\mathit{axes}=\emptyset.
\]
\end{definition}

The node-level coordinate \(\mathit{flow}\) is a policy-label attribute and
is distinct from the action-level classification \(\mathit{CF}(a)\).

This yields nine possible labels shown in Figure~\ref{fig:labels}.

\begin{figure}[ht]
\centering
\caption{Local label encoding lattice.}
\label{fig:labels}
\begin{tikzpicture}[
    >=Latex, scale=0.75, font=\small,
    level/.style={draw, rounded corners, align=center,
                  minimum width=2.0cm, minimum height=0.7cm}
]
\node[level] (L0)  at (0,0)    {$L_0$\\$(\emptyset,\emptyset)$};
\node[level] (L1)  at (0,-2)   {$L_1$\\$(GK,\emptyset)$};

\node[level] (L3C) at (-3,-4)  {$L_{3C}$\\$(GK,\{C\})$};
\node[level] (L2)  at (0,-4)   {$L_2$\\$(Ac,\emptyset)$};
\node[level] (L3I) at (3,-4)   {$L_{3I}$\\$(GK,\{I\})$};

\node[level] (L5C) at (-3,-6) {$L_{5C}$\\$(Ac,\{C\})$};
\node[level] (L4)  at (0,-6)  {$L_4$\\$(GK,\{C,I\})$};
\node[level] (L5I) at (3,-6)  {$L_{5I}$\\$(Ac,\{I\})$};

\node[level] (L6)  at (0,-8)  {$L_6$\\$(Ac,\{C,I\})$};

\draw[->, thick] (L0) -- (L1);

\draw[->, thick] (L1) -- (L2);
\draw[->, thick] (L1) -- (L3C);
\draw[->, thick] (L1) -- (L3I);

\draw[->, thick] (L2) -- (L5C);
\draw[->, thick] (L2) -- (L5I);

\draw[->, thick] (L3C) -- (L4);
\draw[->, thick] (L3C) -- (L5C);

\draw[->, thick] (L3I) -- (L4);
\draw[->, thick] (L3I) -- (L5I);

\draw[->, thick] (L4) -- (L6);
\draw[->, thick] (L5C) -- (L6);
\draw[->, thick] (L5I) -- (L6);
\end{tikzpicture}
\end{figure}

$L_{3C}$ and $L_{3I}$ are incomparable; so are $L_{5C}$ and $L_{5I}$.
The \emph{join operator} $\otimes$ is defined componentwise:
\[
  (f_1, A_1) \otimes (f_2, A_2) := \bigl(\max(f_1, f_2),\; A_1 \cup A_2\bigr).
\]
The join is commutative, associative, and idempotent.

\subsection{Path-Accumulated Labels}

\begin{definition}[Path-accumulated label]
For any node $v \in N$, let $\pi(v)$ be the path from root to $v$. The
\emph{path-accumulated label} is: $J(v) = \bigvee_{u \in \pi(v)} \ell(u)$.
\end{definition}

The join correctly detects when confidentiality and integrity risks
accumulate jointly: for example, suppose that an ancestor \(u\) of a resource node \(v\) has
local label
\[
\ell(u)=L_{3C}=(GK,\{C\}),
\]
that \(v\) has local label
\[
\ell(v)=L_{3I}=(GK,\{I\}),
\]
and that every other node on \(\pi(v)\) has label \(L_0\). Then
\[
\begin{aligned}
J(v)
&=
L_{3C}\otimes L_{3I}\\
&=
(GK,\{C\})\otimes(GK,\{I\})\\
&=
\bigl(\max\{GK,GK\},\{C\}\cup\{I\}\bigr)\\
&=
(GK,\{C,I\})\\
&=
L_4.
\end{aligned}
\]
Thus, the accumulated label records that both confidentiality and integrity
sensitivity occur along the path to \(v\), even though neither local label
alone is \(L_4\).

\subsection{From Labels to Multipliers}

We assign a multiplier $F(\ell) \in [1,2]$ to each label, monotone with
respect to the lattice order. 
Labels related by the confidentiality–integrity symmetry receive equal values 
(in particular, $L_3C$ and $L_3I$, and $L_5C$ and $L_5I$); 
other incomparable labels may receive different policy values. 
This avoids privileging confidentiality over integrity while preserving monotonicity.

\begin{table}[ht]
\centering
\caption{Lattice labels and their multipliers.}
\label{tab:multipliers}
\begin{tabular}{@{}lll@{}}
\toprule
Label & $F$ & Notes \\
\midrule
$L_0$    & 1.00 & Baseline \\
$L_1$    & 1.10 & \\
$L_2$    & 1.20 & \\
$L_{3C}$ & 1.35 & \\
$L_{3I}$ & 1.35 & Equal to $L_{3C}$ (incomparable) \\
$L_4$    & 1.50 & \\
$L_{5C}$ & 1.75 & \\
$L_{5I}$ & 1.75 & Equal to $L_{5C}$ (incomparable) \\
$L_6$    & 2.00 & Maximum \\
\bottomrule
\end{tabular}
\end{table}

\begin{remark}[$F$ range]
The ultrametric uses a base-4 gap between depth bands: each additional LCA
depth level reduces $\mathrm{ultra}$ by a factor of 4. For the multiplier
to refine risk \emph{within} a band without disrupting ordering
\emph{across} bands, we require $F_{\max}/F_{\min} < 4$. The choice
$F_{\max} = 2$, $F_{\min} = 1$ satisfies this.
\end{remark}

\subsection{The Compositional Blast Radius}

Let
\[
\operatorname{node}:\mathcal{A}\to N
\]
map each action to its associated resource node in the enterprise tree $\mathcal T$.

\begin{definition}[Compositional blast radius]
For actions \(a,b\in\mathcal A\), let
\[
u_a:=\operatorname{node}(a),
\qquad
u_b:=\operatorname{node}(b).
\]

Extend \(F\) to pairs of accumulated labels by
\[
F(\ell_1,\ell_2)
:=
\max\!\left\{
F(\ell_1),
F(\ell_2)
\right\},
\qquad
\ell_1,\ell_2\in\mathcal L.
\]
Thus,
\[
F\!\bigl(
J(u_a),
J(u_b)
\bigr)=
\max\!\left\{
F\!\bigl(J(u_a)\bigr),
F\!\bigl(J(u_b)\bigr)
\right\}.
\]

The \emph{compositional blast radius} between \(a\) and \(b\) is
\[
d_{\mathcal A}(a,b)
:=
F\!\bigl(
J(u_a),
J(u_b)
\bigr)
\cdot
\operatorname{ultra}(u_a,u_b).
\]
For brevity, write \(d(a,b):=d_{\mathcal A}(a,b)\).
\end{definition}

The distance \(d_{\mathcal A}\) measures hierarchical and sensitivity-weighted
\emph{structural span}; it is not an operation-severity metric. In particular,
actions associated with the same resource node have zero structural distance
even when their operations differ. Operation type and control-flow
classification are instead used elsewhere in the policy catalogue and
collusion analysis.

The function \(d_{\mathcal A}\) is symmetric. Since distinct actions may be
associated with the same resource node, it is a pseudoultrametric on actions:
distinct actions at the same node have blast radius zero. Its induced
function on resource nodes satisfies the ultrametric strong triangle
inequality, as proved in Appendix~\ref{app:ultrametric}.

\section{The Agent Influence Graph}
\label{sec:influencegraph}

\subsection{Meetings and Influence Potentials}

Let \(\mathcal U\) be the set of agents. Let
\(\mathcal A_{\mathrm{pol}}\) be a finite, agent-independent catalogue of
admissible action instances, each consisting of a resource node,
control-flow classification, and operation, determined by the enterprise
resource topology and policy. For each agent \(U\in\mathcal U\), assume
\[
\mathcal A(U)\subseteq\mathcal A_{\mathrm{pol}}.
\]

Define the write and read action sets of \(U\) by
\[
\mathcal W(U)
:=
\{a\in\mathcal A(U):\mathit{Op}(a)=\mathrm{Write}\},
\qquad
\mathcal R(U)
:=
\{a\in\mathcal A(U):\mathit{Op}(a)=\mathrm{Read}\}.
\]

\begin{definition}[Directed resource-meeting witnesses]
For \(U,V\in\mathcal U\), define
\[
\mathcal M(U,V)
:=
\left\{
(a,b)\in\mathcal W(U)\times\mathcal R(V):
\operatorname{node}(a)=\operatorname{node}(b)
\right\}.
\]
Agents \(U\) and \(V\) have a directed resource meeting when
\[
\mathrm{RMeet}(U,V)
\iff
\mathcal M(U,V)\neq\varnothing.
\]
\end{definition}

\begin{definition}[Undirected A2A meeting]
Let
\[
\mathcal C_{\mathrm{A2A}}
\subseteq
\bigl\{\{U,V\}:U,V\in\mathcal U,\ U\neq V\bigr\}
\]
be the policy relation of agent pairs for which direct communication is
possible. Define
\[
\mathrm{A2AMeet}(U,V)
\iff
\{U,V\}\in\mathcal C_{\mathrm{A2A}}.
\]
Hence,
\[
\mathrm{A2AMeet}(U,V)
\iff
\mathrm{A2AMeet}(V,U).
\]
\end{definition}

\begin{definition}[Meeting]
For \(U,V\in\mathcal U\), define
\[
\mathrm{Meet}(U,V)
\iff
\mathrm{RMeet}(U,V)
\lor
\mathrm{A2AMeet}(U,V).
\]
\end{definition}

For action sets \(B,C\subseteq\mathcal A_{\mathrm{pol}}\), define
\[
\operatorname{Span}(B,C)
:=
\begin{cases}
\displaystyle
\max\{d_{\mathcal A}(a,b):a\in B,\ b\in C\},
&
B\times C\neq\varnothing,
\\[4pt]
0,
&
B\times C=\varnothing.
\end{cases}
\]

\begin{definition}[Influence potential]
For \(U,V\in\mathcal U\), define
\[
\Phi_{\mathrm{res}}(U\to V)
:=
\begin{cases}
\operatorname{Span}(\mathcal W(U),\mathcal R(V)),
&
\mathrm{RMeet}(U,V),
\\
0,
&
\text{otherwise,}
\end{cases}
\]
and
\[
\Phi_{\mathrm{A2A}}(U,V)
:=
\begin{cases}
\operatorname{Span}(\mathcal A(U),\mathcal A(V)),
&
\mathrm{A2AMeet}(U,V),
\\
0,
&
\text{otherwise.}
\end{cases}
\]
The meeting-conditioned influence potential is
\[
\Phi(U\to V)
:=
\max\{
\Phi_{\mathrm{res}}(U\to V),
\Phi_{\mathrm{A2A}}(U,V)
\}.
\]
\end{definition}

\begin{remark}[Role of the meeting condition]
A directed resource meeting establishes that \(U\) can write a resource node
that \(V\) can read. At that shared node, the corresponding write--read pair
has distance zero. This is intentional: the shared resource witnesses the
local interaction, while local least-privilege and mediation controls govern
that interaction itself.

An A2A meeting establishes that \(U\) and \(V\) can communicate directly.
It is treated conservatively as undirected because apparently one-way task
delegation may permit reverse influence through returned content, delegated
artifacts, tool outputs, or prompt-injection payloads.

Conditioned on either type of witness, \(\Phi(U\to V)\) measures the maximum
structural span of the relevant action sets. It is a configuration-risk
metric, not evidence of direct data flow, provenance, command authority, or
realized execution.
\end{remark}

\subsection{The Threshold $\theta$ as a Policy Parameter}

The threshold \(\theta\) is an enterprise policy parameter that determines
which meeting-conditioned influence potentials are sufficiently high impact
to be represented as arcs in the influence graph. It is selected by the
security policy owner; it is not defined to be the maximum blast radius in
the current catalogue or in any individual agent's assigned permissions.

The resource topology, path-accumulated labels, multiplier function \(F\),
and agent-independent policy catalogue provide calibration data for selecting
and reviewing \(\theta\). They do not mechanically determine its value.

Let
\[
\mathcal L_{\geq 3}
:=
\{L_{3C},L_{3I},L_4,L_{5C},L_{5I},L_6\},
\]
and let the material policy catalogue be
\[
\mathcal A_{\geq 3}
:=
\left\{
a\in\mathcal A_{\mathrm{pol}}:
J\!\bigl(\operatorname{node}(a)\bigr)\in\mathcal L_{\geq 3}
\right\}.
\]
Define the catalogue-radius profile
\[
\mathcal R_{\mathrm{cal}}
:=
\left\{
d_{\mathcal A}(a,b):
a,b\in\mathcal A_{\geq 3},\
\operatorname{node}(a)\neq\operatorname{node}(b)
\right\}.
\]
This finite set records the distinct nonzero blast-radius levels available
under the current enterprise hierarchy and materiality policy.

The materiality rule is used here to calibrate and review the policy
threshold \(\theta\). Unless stated otherwise, it does not restrict the
write and read action sets used in \(\Phi(U\to V)\): every authorized action
of an interacting pair may contribute to that pair's influence potential.

The security policy specifies
\[
\theta\in[0,+\infty)
\]
independently of the current agent assignments. In selecting \(\theta\), a
policy owner may use \(\mathcal R_{\mathrm{cal}}\) to identify meaningful
hierarchy and sensitivity boundaries, such as the smallest radius crossing a
Finance--Engineering boundary, a chosen upper quantile of catalogue radii,
or a documented organizational risk tolerance. The selected value, its
rationale, the materiality rule, and the catalogue version used for
calibration should be retained as policy evidence.

For a fixed \(\theta\), an influence potential is high impact when
\[
\Phi(U\to V)\geq\theta.
\]
Thus, lowering \(\theta\) weakly enlarges the influence graph, whereas
raising \(\theta\) weakly removes arcs. The threshold may be reconsidered
when the topology, labels, materiality policy, or enterprise risk tolerance
changes, but such a change is a policy decision rather than an automatic
consequence of adding an action to the catalogue.

The catalogue maximum remains useful as a calibration statistic:
\[
r_{\max}
:=
\begin{cases}
\max \mathcal R_{\mathrm{cal}},
&
\mathcal R_{\mathrm{cal}}\neq\varnothing,
\\[4pt]
0,
&
\mathcal R_{\mathrm{cal}}=\varnothing.
\end{cases}
\]
It bounds the largest blast radius represented by the material catalogue,
but it is not, in general, equal to \(\theta\).

\subsection{The Influence Graph and Effective Autonomy}

\begin{definition}[Influence graph]
For a policy threshold \(\theta\geq 0\), the influence graph is the directed
graph
\[
G(\theta)
=
\bigl(\mathcal U,E_\theta\bigr),
\]
where
\[
E_\theta
:=
\left\{
(U,V)\in\mathcal U\times\mathcal U:
U\neq V,\;
\mathrm{Meet}(U,V),\;
\Phi(U\to V)\geq\theta
\right\}.
\]
\end{definition}

An arc \(U\to V\) has two requirements. First,
\(\mathrm{Meet}(U,V)\) requires either a directed shared-resource witness
or an undirected A2A communication meeting. Second,
\(\Phi(U\to V)\geq\theta\) requires the relevant meeting-conditioned
influence potential to meet the policy threshold.

A resource-mediated arc retains a write--read witness
\((a^{\mathrm{meet}},b^{\mathrm{meet}})\in\mathcal M(U,V)\). An A2A-only
arc retains the communicating agent pair \(\{U,V\}\) as its meeting witness.

For every arc \((U,V)\in E_\theta\), the analysis retains at least one meeting 
witness.
For a resource-mediated arc, this is a write–read witness \((a^{\mathrm{meet}},b^{\mathrm{meet}})\in\mathcal M(U,V)\); 
for an A2A-only arc, it is the communicating pair \{U,V\}. When both conditions hold, both witness types may be retained.

\begin{definition}[Weak projection and candidate-coalition components]
Let \(H=(\mathcal U,E_H)\) be a directed graph on the agent set
\(\mathcal U\). Its \emph{weak projection} \(\overline H\) has an
undirected edge \(\{U,V\}\) whenever either \((U\to V)\in E_H\) or
\((V\to U)\in E_H\).

Each non-singleton weakly connected component of \(\overline H\) is called a
\emph{candidate-coalition component}. A coalition under consideration is
any set of at least two agents contained in one such component.
\end{definition}

Because the collusion predicate is witnessed by a pair of agents, testing
all ordered pairs within each candidate-coalition component is sufficient
to detect every collusion-positive subset; the implementation reports the
containing component rather than enumerating all such subsets.

\begin{definition}[Graph-relative reachability and effective autonomy]
Let \(H=(\mathcal U,E_H)\) be a directed graph on \(\mathcal U\). Write
\(U\leadsto_H V\) when there is a directed path of positive length from
\(U\) to \(V\) in \(H\). Write \(U\leadsto_H^{*}V\) when either \(U=V\) or
\(U\leadsto_H V\).

The \emph{effective autonomy set} of \(U\) relative to \(H\) is
\[
\mathcal E_H(U)
:=
\bigcup_{\substack{
V\in\mathcal U\\
U\leadsto_H^{*}V
}}
\mathcal A(V).
\]
For a coalition \(S\subseteq\mathcal U\), define
\[
\mathcal E_H(S)
:=
\bigcup_{U\in S}\mathcal E_H(U).
\]
\end{definition}

The reflexive closure in this definition ensures that an agent's effective
autonomy includes its own directly assigned actions as well as the actions
reachable through influence paths.

\begin{remark}[Conservative effective-autonomy abstraction]
For least-autonomy screening, directed reachability is treated as a
worst-case capability-exposure relation. Accordingly,
\(\mathcal E_H(U)\) includes all actions assigned to reachable agents.
This is a conservative policy abstraction; it does not assert command
authority, provenance, or realized execution along every directed path.
\end{remark}

\subsection{Computational Complexity}

Suppose there are $n$ agents and each agent has at most $m$ actions. After
standard preprocessing for constant-time LCA queries, direct computation of
all influence potentials examines at most
\[
n(n-1)m^2
\]
ordered write-to-read action pairs. It therefore takes
\[
O(n^2m^2)
\]
time in the worst case.

Constructing $G(\theta)$ and its weak projection takes
$O(n^2+|E_\theta|)$ time once the influence potentials are known.
Computing directed reachability by a graph search from every agent takes
\[
O\bigl(n(n+|E_\theta|)\bigr)
\]
time, and hence $O(n^3)$ in the worst case.

Testing membership in \(\mathcal C_{\mathrm{A2A}}\) for all ordered agent
pairs takes \(O(n^2)\) time with a relation matrix or hash-based
representation. This does not change the \(O(n^2m^2)\) worst-case bound.

A sharper bound requires additional structural assumptions or a specified
indexing scheme for action locations. In particular, no
$O(n^2m\Delta)$ bound is claimed here without an explicit aggregation
argument.

The bounds above cover influence-potential computation, graph construction,
and reachability. Full least-autonomy evaluation additionally depends on
the representation of effective-action sets, sensitivity domains, gate
relations, and collusion-witness enumeration.

\section{Collusion}
\label{sec:collusion}

The directed reachability relation $U\leadsto_H V$ records a
\emph{model-level potential influence path} from $U$ to $V$. It is not, by
itself, a provenance-preserving dataflow trace: 
successive arcs in a path may be witnessed by different resource write--read
pairs or A2A communication meetings.
The relation is therefore used as a conservative screening condition for possible
multi-agent composition.

\subsection{Role Scopes and Gate Relations}

\begin{definition}[Role scopes]
\label{def:rolescopes}
For $U\in\mathcal{U}$, define the \emph{approval scope}
\[
\mathcal{G}(U)
:=
\bigcup_{\substack{a\in\mathcal{A}(U)\\
\mathit{CF}(a)=GK}}
\mathrm{subtree}(\mathit{node}(a)),
\]
and the \emph{access scope}
\[
\mathcal{X}(U)
:=
\bigcup_{\substack{a\in\mathcal{A}(U)\\
\mathit{CF}(a)=Ac}}
\mathrm{subtree}(\mathit{node}(a)).
\]
\end{definition}

Scope overlap alone does not establish that an approval action authorizes
access to a resource. Let
\[
\rightsquigarrow_{\mathrm{gate}}
\ \subseteq\ N\times N
\]
be a security policy relation, where
\[
g\rightsquigarrow_{\mathrm{gate}}x
\]
means that approval authority associated with node $g$ can authorize,
enable, or release access to the resource location $x$.

\begin{definition}[Authorization relation]
\label{def:authorizationrelation}
For agents $U,V\in\mathcal{U}$, write
\[
\mathrm{Gates}(U,V)
\]
when there exist $g\in\mathcal{G}(U)$ and $x\in\mathcal{X}(V)$ such that
\[
g\rightsquigarrow_{\mathrm{gate}}x.
\]
\end{definition}

\subsection{Sensitivity Domains}

\begin{definition}[Sensitivity domain]
A \emph{sensitivity domain} is a policy-defined subset of enterprise
resource nodes. Let
\[
\mathfrak D
=
\left\{
\mathcal D_1,\ldots,\mathcal D_m
\right\}
\]
be a family of nonempty, pairwise disjoint subsets of \(N\). Each
\(\mathcal D_i\) represents a security-relevant operational domain, such as
a production environment, a financial-control boundary, or a regulated-data
environment.

The domain family need not be inferred from sensitivity labels or from the
topology of the enterprise tree. It is specified by the security engineer
and may reflect organizational, regulatory, deployment, or trust boundaries.

For each node \(v\in N\), define
\[
\operatorname{dom}(v)
=
\begin{cases}
i,
&
v\in\mathcal D_i
\text{ for some }
i\in\{1,\ldots,m\},
\\[2pt]
\bot,
&
v\notin\displaystyle\bigcup_{i=1}^{m}\mathcal D_i.
\end{cases}
\]

A symmetric, irreflexive policy relation
\[
\perp_{\mathrm{pol}}
\;\subseteq\;
\{1,\ldots,m\}\times\{1,\ldots,m\}
\]
specifies required separation between domains. For distinct
\(i,j\in\{1,\ldots,m\}\),
\[
i\perp_{\mathrm{pol}}j
\]
means that actions associated with resources in \(\mathcal D_i\) and
\(\mathcal D_j\) are intended to remain separated.
\end{definition}

\subsection{Collusion Predicate}

We consider a conservative threat model in which agents may coordinate,
share intermediate outputs, or exploit independently authorized capabilities
outside a single predeclared workflow. Accordingly, a risk witness need not
be confined to one workflow identifier or one shared business process.

The analysis does not claim that every witness corresponds to a currently
implemented end-to-end workflow. Rather, it identifies configurations in
which directed influence and separately assigned control or access
capabilities can be composed by cooperating, compromised, or improperly
orchestrated agents. Such a witness is sufficient to require policy review
under the least-autonomy criterion.

Let
\[
\mathrm{Loc}(B)
:=
\{
\operatorname{node}(a):a\in B
\}
\]
for every action set \(B\subseteq\mathcal A\).

\begin{definition}[Collusion predicate]
\label{def:collusion}
Let \(H=(\mathcal U,E_H)\) be a directed graph on the agent set
\(\mathcal U\). For any set \(S\subseteq\mathcal U\) with \(|S|\geq2\),
the predicate \(\mathrm{Collude}_H(S)\) holds if and only if there exist
distinct agents \(U,V\in S\) satisfying at least one of the following
conditions:
\begin{enumerate}[label=(\roman*)]
\item
\[
\mathrm{Gates}(U,V)
\quad\text{and}\quad
U\leadsto_H V
\]
\hfill (authorization composition);

\item
\[
\mathrm{Gates}(U,V)
\quad\text{and}\quad
V\leadsto_H U
\]
\hfill (decision manipulation);

\item
\[
U\leadsto_H V
\]
and there exist indices $(i,j)$ with
\[
i\perp_{\mathrm{pol}}j,
\qquad
\mathrm{Loc}(\mathcal E_H(U))
\cap\mathcal D_i\neq\varnothing,
\qquad
\mathrm{Loc}(\mathcal E_H(V))
\cap\mathcal D_j\neq\varnothing
\]
\hfill (cross-domain exposure composition).
\end{enumerate}
\end{definition}

Clause~(iii) is intentionally a conservative reachable-capability test.
Because effective autonomy is monotone along directed reachability, the two
separated-domain capabilities may be assigned to the same reachable agent.
Accordingly, clause~(iii) identifies cross-domain exposure within an
influence-connected configuration; unlike clauses~(i) and~(ii), it does not
require each witness agent to contribute a distinct domain capability.

\begin{remark}[Directionality]
Each clause requires a directed reachability witness in $H$.
Although $d(a,b)$ is symmetric, $\Phi(U\to V)$ need not equal
$\Phi(V\to U)$ because the maximization is over ordered write-to-read action
pairs.

A single agent $U$ for which $\mathrm{Gates}(U,U)$ holds is a
separation-of-duty concern. It is reported separately from collusion, which
concerns distinct agents.
\end{remark}

\begin{remark}[Relation to candidate-coalition components]
Every collusion witness in $H$ lies within one weakly connected component of
$\overline H$. Consequently, non-singleton weak components provide
a screening partition for collusion analysis. A large weak component does
not itself imply $\mathrm{Collude}$: the predicate additionally requires an
oriented influence witness and either an authorization or a cross-domain
condition.
\end{remark}

\begin{proposition}[Collusion monotonicity]
\label{prop:collusionmono}
Let $S,S'\subseteq\mathcal{U}$ with $|S|\geq2$ and $S\subseteq S'$.
Then:
\begin{enumerate}[label=(\roman*)]
\item If $\mathrm{Collude}_{G(\theta)}(S)$ and $\theta'\leq\theta$, then
\[
\mathrm{Collude}_{G(\theta')}(S).
\]

\item If $\mathrm{Collude}_{G(\theta)}(S)$, then
\[
\mathrm{Collude}_{G(\theta)}(S').
\]
\end{enumerate}
\end{proposition}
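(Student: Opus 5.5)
The plan is to prove both parts by exhibiting the same witness pair in the larger setting. For part (i), first show that lowering the threshold only adds arcs: $E_\theta\subseteq E_{\theta'}$ whenever $\theta'\leq\theta$. For part (ii), simply reuse the witness pair inside the larger agent set.

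\emph{Step 1 (arc monotonicity).} The meeting relation $\mathrm{Meet}(U,V)$ and the potential $\Phi(U\to V)$ do not depend on $\theta$. They are built only from $\mathcal A(\cdot)$, $\mathcal C_{\mathrm{A2A}}$, and $d_{\mathcal A}$. So if $(U,V)\in E_\theta$, then $U\neq V$, $\mathrm{Meet}(U,V)$ holds, and $\Phi(U\to V)\geq\theta\geq\theta'$. Hence $(U,V)\in E_{\theta'}$.

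\emph{Step 2 (reachability and autonomy monotonicity).} Any directed path of positive length in $G(\theta)$ is also a path in $G(\theta')$. Therefore $U\leadsto_{G(\theta)}V$ implies $U\leadsto_{G(\theta')}V$, and the same holds for the reflexive version $\leadsto^{*}$. It follows that
\[
\mathcal E_{G(\theta)}(U)\subseteq\mathcal E_{G(\theta')}(U),
\]
since the union defining $\mathcal E_{G(\theta')}(U)$ ranges over a superset of agents. Applying $\mathrm{Loc}$ preserves inclusion.

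\emph{Step 3 (case analysis for (i)).} Take distinct $U,V\in S$ witnessing $\mathrm{Collude}_{G(\theta)}(S)$, and treat the three clauses of Definition~\ref{def:collusion} separately.
In clauses (i) and (ii), $\mathrm{Gates}(U,V)$ does not depend on the graph, since it is defined from the role scopes and $\rightsquigarrow_{\mathrm{gate}}$. The required reachability, $U\leadsto V$ or $V\leadsto U$, transfers to $G(\theta')$ by Step 2.
In clause (iii), $U\leadsto V$ transfers by Step 2, and the same indices $i\perp_{\mathrm{pol}}j$ still work. Nonempty intersections with $\mathcal D_i$ and $\mathcal D_j$ persist because the location sets only grow.
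Thus the same pair witnesses $\mathrm{Collude}_{G(\theta')}(S)$.

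\emph{Step 4 (part (ii)).} The graph $H=G(\theta)$ is defined on all of $\mathcal U$ and does not depend on $S$. The clauses refer only to the pair $U,V$ and to $H$. A witness pair of distinct agents in $S$ is therefore also a pair of distinct agents in $S'$, since $S\subseteq S'$. Also $|S'|\geq|S|\geq2$, so the predicate is defined on $S'$, and $\mathrm{Collude}_{G(\theta)}(S')$ holds.

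I do not expect a real obstacle. The only point needing care is clause (iii) in part (i), where monotonicity of the effective autonomy sets under the change of graph must be stated explicitly. This is Step 2, which relies on the reflexive closure and on the union over reachable agents.
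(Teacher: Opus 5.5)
Your proof is correct and takes the same route as the paper's. Lowering the threshold only adds arcs, so reachability and the effective-autonomy sets can only grow, the gate and domain conditions do not depend on $\theta$, and for (ii) the same witness pair lies in $S'$. You are more explicit than the paper, for instance in noting that $\mathrm{Meet}$ and $\Phi$ are $\theta$-independent and in checking $|S'|\geq 2$, but the argument is the same.
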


\begin{proof}
For (i), decreasing the threshold from $\theta$ to $\theta'$ can only add
arcs to $G(\theta)$, since
\[
\Phi(U\to V)\geq\theta
\quad\Longrightarrow\quad
\Phi(U\to V)\geq\theta'.
\]
Hence every directed-reachability witness remains valid, and each effective
autonomy set \(\mathcal E_{G(\theta)}(U)\) can only grow when \(\theta\) is lowered. The authorization and domain conditions are
independent of the threshold, so the original collusion witness persists.

For (ii), the same distinct witness agents $U,V\in S$ also belong to $S'$.
Thus the conditions witnessing $\mathrm{Collude}_{G(\theta)}(S)$ also witness
$\mathrm{Collude}_{G(\theta)}(S')$.
\end{proof}

\section{Least Autonomy as a Design Criterion}
\label{sec:design}

\subsection{The Three-Layer Theory}

The framework operates at three related layers:
\begin{enumerate}[label=(\arabic*)]
\item \textbf{Action layer.} The blast radius $d(a,b)$ measures the
structural security distance between actions.

\item \textbf{Agent layer.} The influence potential $\Phi(U\to V)$ and
directed graph $G(\theta)$ represent potential
influence among organizationally distant agents.

\item \textbf{System layer.} The collusion predicate identifies specific
multi-agent policy violations within the operational graph. 
\end{enumerate}

\paragraph{Two policy parameters.}
The threshold \(\theta\) and the individual-autonomy budget
\(\tau_{\mathrm{aut}}\) serve different purposes. The threshold \(\theta\)
selects the high-impact meeting-conditioned relations represented in the
influence graph \(G(\theta)\).
It does not constrain direct assignments of actions to
one agent. The separate policy parameter
\(\tau_{\mathrm{aut}}\) bounds the blast radius of the action pairs that one
agent may hold directly or acquire through autonomous influence.

\begin{definition}[Individual blast-radius bound]
Let \(H\) be an influence graph and let
\[
\tau_{\mathrm{aut}}\geq 0
\]
be an individual-autonomy budget. An agent \(U\in\mathcal U\) satisfies the
\emph{individual blast-radius bound} at budget
\(\tau_{\mathrm{aut}}\) when
\[
\forall a,b\in\mathcal E_H(U),
\qquad
a\neq b
\ \Longrightarrow\
d_{\mathcal A}(a,b)\leq\tau_{\mathrm{aut}}.
\]
Equivalently, whenever \(\mathcal E_H(U)\) contains at least two distinct
actions,
\[
\max\!\left\{
d_{\mathcal A}(a,b):
a,b\in\mathcal E_H(U),
\ a\neq b
\right\}
\leq
\tau_{\mathrm{aut}}.
\]
\end{definition}

\subsection{Least Autonomy Condition}

The operational graph records potential autonomous influence. A system should
permit such influence only where it is explicitly authorized. 

Let
\[
\mathsf{Allow}\subseteq\mathcal U\times\mathcal U
\]
be the policy relation of agent pairs for which autonomous influence is
permitted. 

\begin{definition}[Least autonomy condition]
A system satisfies \emph{least autonomy} at policy parameters
\[
\bigl(
\theta,
\tau_{\mathrm{aut}}
\bigr)
\]
if:
\begin{enumerate}[label=(\alph*)]
\item every agent satisfies the individual blast-radius bound in the
influence graph:
\[
\forall U\in\mathcal U,
\quad
\forall a,b\in\mathcal E_{G(\theta)}(U),
\quad
a\neq b
\ \Longrightarrow\
d_{\mathcal A}(a,b)\leq\tau_{\mathrm{aut}};
\]

\item every influence arc is explicitly authorized:
\[
\forall (U,V)\in E_\theta,
\qquad
(U,V)\in\mathsf{Allow};
\]

\item no set of distinct agents has a collusion witness in the 
graph:
\[
\mathrm{Collude}_{G(\theta)}(S)
=
\mathrm{false}
\qquad
\text{for every }
S\subseteq\mathcal U
\text{ with }
|S|\geq 2.
\]
\end{enumerate}
\end{definition}

Condition~(a) is logically independent of conditions~(b) and~(c). An
agent can violate the individual blast-radius bound through direct
permission assignments alone, even when it has no incoming or outgoing
influence arc. Conversely, authorization and the absence of a collusion
witness do not limit the concentration of harmful capability inside one
effective-autonomy set. The three conditions therefore control distinct
risks: individual concentration of authority, unauthorized autonomous
influence, and harmful multi-agent composition.

\subsection{Hierarchical Partitioning}

The ultrametric hierarchy supports a useful partitioning strategy, but it
does not by itself solve the least-autonomy problem. Security domains may be
organized as subtrees of $\mathcal{T}$, and the blast-radius metric helps
identify direct influence relations likely to cross domain boundaries.

However, direct partitioning must be followed by analysis of the 
graph, because influence may compose through intermediate agents. In
particular, requiring cross-domain direct influence potentials to remain below 
$\theta$ is helpful but does not by itself rule out multi-hop paths or
gatekeeping relationships. The least-autonomy condition therefore combines
domain design with explicit authorization, and
collusion analysis.

\subsection{Step-by-Step Design Procedure}
\label{sec:procedure}

Steps 1--4 establish the resource and policy model. The remaining steps are
repeated whenever permissions, resources, or workflow gates change.

\begin{enumerate}[label=\textbf{Step \arabic*.}, leftmargin=*, align=left]
\item \textbf{Construct the resource hierarchy and policy catalogue.}
Define the enterprise tree $\mathcal{T}$, identify resource locations,
and construct the agent-independent action catalogue
$\mathcal{A}_{\mathrm{pol}}$.

\item \textbf{Classify resource providers and assign local labels.}
Tag each provider as Standard, C, I, or CI, then assign the corresponding
local labels $\ell(v)$ to hierarchy nodes.

\item \textbf{Configure policy relations.}
Specify high-sensitivity nodes, sensitivity domains, the separation
relation \(\perp_{\mathrm{pol}}\), the gate relation
\(\rightsquigarrow_{\mathrm{gate}}\), the undirected A2A communication
relation \(\mathcal C_{\mathrm{A2A}}\), and the authorized agent-pair
relation \(\mathsf{Allow}\).

\item \textbf{Set and document the policy parameters.}
Select \(\theta\) from the enterprise's risk tolerance and intended
cross-scope control boundary. Use the material catalogue-radius profile
\(\mathcal R_{\mathrm{cal}}\) to calibrate and justify that selection.
Separately select \(\tau_{\mathrm{aut}}\) as the maximum authority span that
one agent may hold directly or acquire through autonomous influence.
Reassess both parameters when the policy model or risk tolerance changes.

\item \textbf{Compute meetings and influence potentials.}
Compute \(J(v)\) by a root-to-leaf traversal. For each ordered agent pair,
determine whether it has a directed resource meeting, an undirected A2A
meeting, or both. Use preprocessed LCA queries to compute the corresponding
meeting-conditioned influence potential \(\Phi(U\to V)\).

\item \textbf{Construct the influence graph.}
Build $G(\theta)$, its weak projection, and its non-singleton
candidate-coalition components.

\item \textbf{Evaluate least autonomy and iterate.}
Compute directed reachability and effective autonomy in
\(G(\theta)\). Check the individual blast-radius bound for
every agent, verify that every
arc \((U,V)\in E_\theta\) satisfies \((U,V)\in\mathsf{Allow}\), and evaluate the collusion predicate for each
candidate-coalition component. 
Where a condition fails, revise direct
permissions, the authorization relation, workflow design, hierarchy, or
policy thresholds, then recompute the graph.
\end{enumerate}

\subsection{Least Privilege vs.\ Least Autonomy}

\begin{table}[ht]
\centering
\caption{Least privilege and least autonomy.}
\label{tab:comparison}
\renewcommand{\arraystretch}{1.3}
\begin{tabular}{@{}p{3.2cm}p{4.4cm}p{5.2cm}@{}}
\toprule
\textbf{Dimension} & \textbf{Least Privilege} & \textbf{Least Autonomy} \\
\midrule
Primary unit of analysis
& Direct permission on a resource
& Potential influence and composition across agents \\

Typical scope
& Identity-to-resource access
& Gatekeeping and directed influence \\

Multi-agent reasoning
& Not explicit in the classical formulation
& Explicit through influence paths and collusion witnesses \\

Sensitivity representation
& Often policy-specific or binary
& Hierarchy plus lattice-valued sensitivity context \\

Control objective
& Minimize direct permissions
& Minimize unauthorized autonomous influence and harmful composition \\
\bottomrule
\end{tabular}
\end{table}

\section{Illustrative Example}
\label{sec:example}

This example uses only the action sets and role scopes already defined in the
main text. In particular, it does not introduce interface-specific action
subsets: directed meetings and influence potentials use
\(\mathcal W(U)\), \(\mathcal R(U)\), and \(\Phi(U\to V)\) exactly as in
Section~\ref{sec:influencegraph}; approval and access relations use
\(\mathcal G(U)\), \(\mathcal X(U)\), and
\(\rightsquigarrow_{\mathrm{gate}}\) exactly as in
Section~\ref{def:rolescopes}.

The example exercises only the resource-mediated branch of the generalized
meeting definition. No A2A communication pair is included in
\(\mathcal C_{\mathrm{A2A}}\).

\subsection{Hierarchy, Labels, and Policy Parameters}

The enterprise tree has root \(T\) at depth \(0\). At depth \(1\), it
separates into Operations and Corporate. At depth \(2\), Operations separates
into Finance, Engineering, and Compliance; Corporate contains Legal. At
depth \(3\), Finance contains Market Data, Treasury, and Controls;
Engineering contains Build Security and Production Security; Compliance
contains Audit; and Legal contains eDiscovery. The resources below are leaves
at depth \(4\).

All non-leaf nodes have local label \(L_0\). Therefore, for every resource
leaf \(r\) in Table~\ref{tab:example-labels},
\[
J(r)=\ell(r).
\]
The table uses only labels defined by the sensitivity lattice.

\begin{table*}[t]
\centering
\caption{Resource leaves, local labels, and accumulated labels.}
\label{tab:example-labels}

\setlength{\tabcolsep}{4pt}
\renewcommand{\arraystretch}{1.08}
\small

\begin{tabularx}{\textwidth}{
@{}
>{\raggedright\arraybackslash}p{0.22\textwidth}
>{\raggedright\arraybackslash}p{0.18\textwidth}
>{\raggedright\arraybackslash}X
c
c
@{}
}
\toprule
Division & Resource leaf & Purpose & \(\ell(r)\) & \(F(J(r))\) \\
\midrule
Finance / Market Data
& Rate Feed
& Threshold-only source
& \(L_{5I}\) & 1.75 \\

Finance / Treasury
& Wire Transfers
& Authorization-composition source
& \(L_{5I}\) & 1.75 \\

Finance / Market Data
& Market-Risk Report
& Decision-manipulation source
& \(L_{5C}\) & 1.75 \\

Finance / Treasury
& Settlement Note
& Threshold-and-budget source
& \(L_{5I}\) & 1.75 \\

Finance / Controls
& \(Q_{\theta}\) (Threshold Handoff Queue)
& Directed meeting
& \(L_4\) & 1.50 \\

Finance / Controls
& \(Q_{\theta\tau}\) (Both Handoff Queue)
& Directed meeting
& \(L_4\) & 1.50 \\

Finance / Controls
& \(Q_{\mathrm{auth}}\) (Signing Authorization Queue)
& Directed meeting and gate
& \(L_4\) & 1.50 \\

Finance / Controls
& \(Q_{\mathrm{review}}\) (Review Evidence Queue)
& Directed meeting
& \(L_4\) & 1.50 \\

Finance / Controls
& \(Q_{\mathrm{decision}}\) (Release Decision Queue)
& Reverse low-span meeting and gate
& \(L_4\) & 1.50 \\

Finance / Controls
& Requested Release
& Decision-gated Accessor target
& \(L_{5I}\) & 1.75 \\

Engineering / Build Security
& Signing Key
& Threshold-only target
& \(L_{5I}\) & 1.75 \\

Engineering / Build Security
& Build Attestation
& Threshold-and-budget target
& \(L_{5I}\) & 1.75 \\

Engineering / Build Security
& Release Manifest
& Authorization-composition target
& \(L_{5C}\) & 1.75 \\

Engineering / Production Security
& Deployment Policy
& Decision-manipulation target
& \(L_{5I}\) & 1.75 \\

Compliance / Audit
& Audit Evidence
& Isolated wide-authority action
& \(L_{5C}\) & 1.75 \\

Corporate / Legal / eDiscovery
& Legal Hold
& Isolated wide-authority action
& \(L_{5C}\) & 1.75 \\

Corporate / Legal / eDiscovery
& Corporate Audit Package
& Threshold-and-budget source
& \(L_{5C}\) & 1.75 \\

Compliance / Audit
& Compliance Bulletin
& Clean case
& \(L_2\) & 1.20 \\
\bottomrule
\end{tabularx}
\end{table*}

The hierarchy gives four relevant structural factors:
\[
\begin{aligned}
\operatorname{ultra}_{\mathrm{Controls}}
&=
2^{-(2\cdot3+1)}
=
\frac1{128},\\
\operatorname{ultra}_{\mathrm{Finance}}
&=
2^{-(2\cdot2+1)}
=
\frac1{32},\\
\operatorname{ultra}_{\mathrm{Fin\text{-}Eng}}
&=
2^{-(2\cdot1+1)}
=
\frac18,\\
\operatorname{ultra}_{\mathrm{Ops\text{-}Corp}}
&=
2^{-(2\cdot0+1)}
=
\frac12.
\end{aligned}
\]
The following reference radii are those used by the maximizing pairs below.
Each such pair contains at least one \(L_{5C}\) or \(L_{5I}\) endpoint, so
its pair multiplier is \(1.75\). Other action pairs in the same structural
band may have smaller radii when their endpoint labels are lower.
\[
\begin{aligned}
r_{\mathrm{Controls}}
&=
1.75\cdot\frac1{128}
=
0.013671875,\\
r_{\mathrm{Finance}}
&=
1.75\cdot\frac1{32}
=
0.0546875,\\
r_{\mathrm{Fin\text{-}Eng}}
&=
1.75\cdot\frac18
=
0.21875,\\
r_{\mathrm{Ops\text{-}Corp}}
&=
1.75\cdot\frac12
=
0.875.
\end{aligned}
\]

The example adopts
\[
\theta=0.21875,
\qquad
\tau_{\mathrm{aut}}=0.50000.
\]
The threshold is an externally selected policy boundary calibrated to the
Finance--Engineering radius. The larger Operations--Corporate radius
\(0.875\) remains catalogue calibration data and does not automatically
change \(\theta\).

\subsection{Agents, Actions, and Directed Meetings}

Table~\ref{tab:example-actions} specifies action control-flow type and
operation explicitly. A queue writer may be a GateKeeper action and its
consumer may be an Accessor read action; both actions remain members of the
ordinary write and read sets used by the influence-potential definition.

For brevity, the resource-provider and sensitivity fields of each action are
suppressed in the table; they are fixed by the policy catalogue for the
listed resource node. The calculations below use the associated node,
control-flow classification, and operation.

\begin{table*}[t]
\centering
\caption{Agent actions.  The notation \((GK,W)\) denotes a GateKeeper write
and \((Ac,R)\) an Accessor read.}
\label{tab:example-actions}
\renewcommand{\arraystretch}{1.12}
\small
\begin{tabular}{@{}p{0.16\linewidth}p{0.39\linewidth}p{0.35\linewidth}@{}}
\toprule
Agent & Write actions & Read actions \\
\midrule
\(U_{\tau}\)
&
Audit Evidence \((Ac,W)\)
&
Legal Hold \((Ac,R)\)
\\[3pt]
\(U_{\theta}^{s}\)
&
Rate Feed \((Ac,W)\);
\(Q_{\theta}\) \((GK,W)\)
&
-- \\
[3pt]
\(U_{\theta}^{t}\)
&
--
&
\(Q_{\theta}\) \((Ac,R)\);
Signing Key \((Ac,R)\)
\\[3pt]
\(U_{\theta\tau}^{s}\)
&
Settlement Note \((Ac,W)\);
Corporate Audit Package \((Ac,W)\);
\(Q_{\theta\tau}\) \((GK,W)\)
&
-- \\
[3pt]
\(U_{\theta\tau}^{t}\)
&
--
&
\(Q_{\theta\tau}\) \((Ac,R)\);
Build Attestation \((Ac,R)\)
\\[3pt]
\(U_{\mathrm{auth}}^{s}\)
&
Wire Transfers \((Ac,W)\);
\(Q_{\mathrm{auth}}\) \((GK,W)\)
&
-- \\
[3pt]
\(U_{\mathrm{auth}}^{t}\)
&
--
&
\(Q_{\mathrm{auth}}\) \((Ac,R)\);
Release Manifest \((Ac,R)\)
\\[3pt]
\(U_{\mathrm{req}}\)
&
Market-Risk Report \((Ac,W)\);
\(Q_{\mathrm{review}}\) \((Ac,W)\)
&
\(Q_{\mathrm{decision}}\) \((Ac,R)\);
Requested Release \((Ac,R)\)
\\[3pt]
\(U_{\mathrm{dec}}\)
&
\(Q_{\mathrm{decision}}\) \((GK,W)\)
&
\(Q_{\mathrm{review}}\) \((Ac,R)\);
Deployment Policy \((Ac,R)\)
\\[3pt]
\(U_0\)
&
--
&
Compliance Bulletin \((Ac,R)\)
\\
\bottomrule
\end{tabular}
\end{table*}

The nonempty directed meeting-witness sets are:
\[
\begin{array}{c|c}
\text{Ordered pair} & \text{Shared write--read node}\\
\hline
(U_{\theta}^{s},U_{\theta}^{t}) & Q_{\theta}\\
(U_{\theta\tau}^{s},U_{\theta\tau}^{t}) & Q_{\theta\tau}\\
(U_{\mathrm{auth}}^{s},U_{\mathrm{auth}}^{t}) & Q_{\mathrm{auth}}\\
(U_{\mathrm{req}},U_{\mathrm{dec}}) & Q_{\mathrm{review}}\\
(U_{\mathrm{dec}},U_{\mathrm{req}}) & Q_{\mathrm{decision}}
\end{array}
\]
No other ordered pair has a shared write--read node. In particular,
\(U_{\tau}\) has no directed meeting with any other agent.

\subsection{Influence Potentials and Graph Construction}

For the local reverse meeting,
\[
\begin{aligned}
\Phi(U_{\mathrm{dec}}\to U_{\mathrm{req}})
&=
d_{\mathcal A}
\bigl(
Q_{\mathrm{decision}},
\text{Requested Release}
\bigr)\\
&=
1.75\cdot\frac1{128}\\
&=
0.013671875
<
\theta.
\end{aligned}
\]
This meeting is genuine but does not produce an influence arc.

The remaining meeting-conditioned potentials are:
\[
\begin{aligned}
\Phi(U_{\theta}^{s}\to U_{\theta}^{t})
&=
d_{\mathcal A}
\bigl(
\text{Rate Feed},
\text{Signing Key}
\bigr)
=
0.21875,
\\
\Phi(U_{\theta\tau}^{s}\to U_{\theta\tau}^{t})
&=
d_{\mathcal A}
\bigl(
\text{Corporate Audit Package},
\text{Build Attestation}
\bigr)
=
0.875,
\\
\Phi(U_{\mathrm{auth}}^{s}\to U_{\mathrm{auth}}^{t})
&=
d_{\mathcal A}
\bigl(
\text{Wire Transfers},
\text{Release Manifest}
\bigr)
=
0.21875,
\\
\Phi(U_{\mathrm{req}}\to U_{\mathrm{dec}})
&=
d_{\mathcal A}
\bigl(
\text{Market-Risk Report},
\text{Deployment Policy}
\bigr)
=
0.21875.
\end{aligned}
\]
The shared queue is the directed meeting witness in each case; the displayed
remote action pair attains the maximum and measures operational span. It is
not asserted to be a direct data-flow channel.

Therefore,
\[
E_\theta
=
\left\{
(U_{\theta}^{s},U_{\theta}^{t}),
(U_{\theta\tau}^{s},U_{\theta\tau}^{t}),
(U_{\mathrm{auth}}^{s},U_{\mathrm{auth}}^{t}),
(U_{\mathrm{req}},U_{\mathrm{dec}})
\right\}.
\]
Every listed arc is admitted through a directed resource meeting and
satisfies \(\Phi(U\to V)\geq\theta\).

\subsection{Individual Blast Radius, Authorization, and Collusion}

For concise reporting, define the direct and effective maximum radii by
\[
\beta_{\mathrm{dir}}(U)
:=
\begin{cases}
\displaystyle
\max\left\{
d_{\mathcal A}(a,b):
a,b\in\mathcal A(U),\
a\neq b
\right\},
&
|\mathcal A(U)|\geq2,
\\[6pt]
0,
&
|\mathcal A(U)|<2,
\end{cases}
\]
and
\[
\beta_H(U)
:=
\begin{cases}
\displaystyle
\max\left\{
d_{\mathcal A}(a,b):
a,b\in\mathcal E_H(U),\
a\neq b
\right\},
&
|\mathcal E_H(U)|\geq2,
\\[6pt]
0,
&
|\mathcal E_H(U)|<2.
\end{cases}
\]
These are reporting shorthands for the individual blast-radius bound in
Section~\ref{sec:design}; they introduce no new action or scope types.

All influence arcs are authorized in this baseline:
\[
\mathsf{Allow}=E_\theta.
\]
This choice isolates the individual-radius and collusion conditions; the
example does not include an unauthorized-influence case.

The following calculations distinguish the six cases:
\[
\begin{array}{c|c|c}
\text{Agent} & \beta_{\mathrm{dir}}(U) & \beta_H(U)\\
\hline
U_{\tau} & 0.875 & 0.875\\
U_{\theta}^{s} & 0.0546875 & 0.21875\\
U_{\theta}^{t} & 0.21875 & 0.21875\\
U_{\theta\tau}^{s} & 0.875 & 0.875\\
U_{\theta\tau}^{t} & 0.21875 & 0.21875\\
U_{\mathrm{auth}}^{s} & 0.0546875 & 0.21875\\
U_{\mathrm{auth}}^{t} & 0.21875 & 0.21875\\
U_{\mathrm{req}} & 0.0546875 & 0.21875\\
U_{\mathrm{dec}} & 0.21875 & 0.21875\\
U_0 & 0 & 0
\end{array}
\]
Thus, every agent except \(U_{\tau}\) and
\(U_{\theta\tau}^{s}\) has individual direct radius below \(0.25\), and
every agent in the threshold-only, authorization-composition, and
decision-manipulation components has effective radius
\(0.21875<\tau_{\mathrm{aut}}\).

For collusion analysis, specify exactly two gate relations:
\[
Q_{\mathrm{auth}}
\rightsquigarrow_{\mathrm{gate}}
\text{Release Manifest},
\qquad
Q_{\mathrm{decision}}
\rightsquigarrow_{\mathrm{gate}}
\text{Requested Release}.
\]
All other gate-relation pairs are absent. Hence,
\[
\begin{aligned}
\mathrm{Gates}
(U_{\mathrm{auth}}^{s},U_{\mathrm{auth}}^{t})
&=\mathrm{true},\\
\mathrm{Gates}
(U_{\mathrm{dec}},U_{\mathrm{req}})
&=\mathrm{true}.
\end{aligned}
\]
Set
\[
\perp_{\mathrm{pol}}=\varnothing
\]
to isolate the authorization and decision clauses of the collusion predicate.

The following are local agent-level or component-level diagnostic outcomes.
They do not imply that the baseline configuration as a whole satisfies least
autonomy.

\begin{enumerate}
\item \textbf{Individual-budget breach only.}
\[
\beta_H(U_{\tau})=0.875>\tau_{\mathrm{aut}}.
\]
Agent \(U_{\tau}\) has no directed meeting, no influence arc, and no
collusion witness.

\item \textbf{Material influence only.}
The pair
\[
(U_{\theta}^{s},U_{\theta}^{t})
\]
has
\[
\Phi(U_{\theta}^{s}\to U_{\theta}^{t})
=
\theta
=
0.21875,
\]
while both agents satisfy the individual budget and have no collusion
witness.

\item \textbf{Both material influence and individual-budget breach.}
\[
\Phi(U_{\theta\tau}^{s}\to U_{\theta\tau}^{t})
=
0.875
\geq\theta,
\qquad
\beta_H(U_{\theta\tau}^{s})
=
0.875
>
\tau_{\mathrm{aut}}.
\]
No gate relation is defined for this component.

\item \textbf{Authorization-composition collusion.}
\[
\mathrm{Gates}
(U_{\mathrm{auth}}^{s},U_{\mathrm{auth}}^{t})
\quad\text{and}\quad
U_{\mathrm{auth}}^{s}
\leadsto_H
U_{\mathrm{auth}}^{t}.
\]
Therefore,
\[
\mathrm{Collude}_H
\bigl(
\{U_{\mathrm{auth}}^{s},U_{\mathrm{auth}}^{t}\}
\bigr)
=
\mathrm{true}.
\]
Both agents remain \(\tau_{\mathrm{aut}}\)-compliant.

\item \textbf{Decision-manipulation collusion.}
\[
\mathrm{Gates}
(U_{\mathrm{dec}},U_{\mathrm{req}})
\quad\text{and}\quad
U_{\mathrm{req}}
\leadsto_H
U_{\mathrm{dec}}.
\]
Therefore,
\[
\mathrm{Collude}_H
\bigl(
\{U_{\mathrm{req}},U_{\mathrm{dec}}\}
\bigr)
=
\mathrm{true}.
\]
The gate direction is opposite to the influence direction, as required by
clause~(ii) of the collusion predicate. Both agents remain
\(\tau_{\mathrm{aut}}\)-compliant.

\item \textbf{Locally clean case.}
Agent \(U_0\) has one standard Accessor action, no meeting witness, no
incident influence arc, no gate relation, and
\[
\beta_H(U_0)=0.
\]
It satisfies the individual blast-radius bound and has no incident influence
arc or local collusion witness. The baseline configuration as a whole still
fails least autonomy because other components are collusion-positive.
\end{enumerate}

The \(\theta\) threshold is an edge-admission policy, not a standalone
per-agent budget. Consequently, the two collusion cases must be incident to
\(\theta\)-qualified arcs in order to appear in \(H\) and be evaluated by
the current collusion predicate. Their intended interpretation is
\emph{individually narrow and \(\tau_{\mathrm{aut}}\)-compliant, but
collusion-positive}.

\section{Limitations and Future Work}
\label{sec:limitations}

\paragraph{Directed and semantic influence.}
The operational graph records model-level potential influence, not a
provenance-preserving account of dataflow. A directed path can be witnessed
by different write-to-read action pairs on its successive arcs. 

\paragraph{Static workflows and action sequencing.}
The model does not represent time, workflow state, conditional approvals,
revocation delays, or the semantics of long action sequences. Consequently,
it may over- or under-approximate risk in systems where these features
materially affect whether an influence path can be exercised.

\paragraph{Operation-sensitive risk.}
The current blast-radius metric measures structural separation between
resource nodes and does not directly distinguish operation types or
control-flow classifications. Consequently, a read action and a destructive
write or delete action associated with the same node have zero structural
distance. Such actions may still be significant under direct-permission,
gate, or collusion checks, but the model does not currently assign them
different local blast radii. Extending the framework with an
operation-severity taxonomy is left to future work.

\paragraph{Policy-model specification.}
The operational analysis depends on an accurately specified resource
hierarchy, action catalogue, label assignment, gate relation, domain
separation relation, and authorization relation.
Provider-level classification may be a useful practical approximation, but it can obscure
resource-level variation. Methods for eliciting, validating, and maintaining
these policy inputs are outside the present scope.

\paragraph{Availability and safety.}
The current label structure addresses confidentiality and integrity only.
Availability, physical safety, financial loss, and other risk dimensions
would require additional labels, policy relations, and validation criteria.

\paragraph{Coalition-level quantitative bounds.}
The framework identifies collusion witnesses but does not provide
quantitative bounds on coalition effective autonomy in terms of individual
blast radii. Establishing such bounds, if possible under suitable structural
assumptions, remains future work.

\paragraph{Direct agent-to-agent communication.}
The model represents direct A2A communication through a binary undirected
meeting relation. This is intentionally conservative: a configured
communication path is treated as permitting reciprocal influence, even when
the operational workflow appears one-way. The model does not yet distinguish
channel authentication strength, message provenance, protocol semantics,
content filtering, session state, or runtime enforcement.

\section{Conclusion}
\label{sec:conclusion}

Least privilege remains necessary for agentic AI systems, but it does not by
itself characterize the authority exposure that can arise when permissions
are distributed across interacting agents. This paper introduced least
autonomy as a complementary design criterion for that setting.

The theory has three layers. First, the compositional blast radius
\(d(a,b)\) provides a pseudoultrametric over actions that combines
enterprise hierarchy with accumulated sensitivity. Second, the
meeting-conditioned influence potential \(\Phi(U\to V)\) and influence graph
\(G(\theta)\) identify high-impact resource-mediated and A2A-mediated
interactions.
Third, effective autonomy and the collusion predicate screen for
authorization composition, decision manipulation, and cross-domain
capability composition under a conservative threat model.

The framework is deliberately a static policy-analysis abstraction. A
positive influence or collusion result identifies an auditable configuration
that warrants review; it does not by itself establish provenance, command
authority, malicious intent, or a realized end-to-end execution trace.
Likewise, the framework requires policy inputs beyond provider
classification, including the resource hierarchy, action catalogue,
thresholds, authorization and gate relations, and separation policy.

The intended use is therefore to complement existing access-control and
agent-runtime safeguards: it helps security engineers identify where
otherwise individually plausible permissions create excessive autonomous
reach or capability-composition risk. Future work should evaluate the
calibration of \(\theta\) and \(\tau_{\mathrm{aut}}\), the precision of the
screening results, and the integration of workflow and runtime evidence.

\newpage

\appendix
\section{The Compositional Blast Radius Is Pseudoultrametric}
\label{app:ultrametric}

Let \(N\) denote the node set of the rooted enterprise tree. For
\(x,y\in N\), define the underlying hierarchical distance
\[
u(x,y)
:=
\begin{cases}
0, & x=y,\\[2pt]
2^{-\left(2D\!\bigl(\operatorname{lca}(x,y)\bigr)+1\right)},
& x\neq y.
\end{cases}
\]

Let \(J:N\to\mathcal L\) assign a path-accumulated security label to each
node, and let
\[
F:\mathcal L\to(0,\infty)
\]
be the corresponding multiplier function. Assume that \(\mathcal L\) is
finite, and define
\[
F_{\min}
:=
\min_{\ell\in\mathcal L}F(\ell),
\qquad
F_{\max}
:=
\max_{\ell\in\mathcal L}F(\ell).
\]
Assume that
\[
\frac{F_{\max}}{F_{\min}}<4.
\]

For each node \(x\in N\), write
\[
m(x):=F(J(x)).
\]
Define the node-level compositional blast radius by
\[
d_N(x,y)
:=
\max\!\left\{
m(x),
m(y)
\right\}
u(x,y),
\qquad
x,y\in N.
\]

\begin{proposition}
\label{prop:node-ultrametric}
The function \(d_N\) is an ultrametric on \(N\).
\end{proposition}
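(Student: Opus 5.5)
The plan is to verify the axioms directly, using only the tree structure of \(u\) and the ratio assumption \(F_{\max}/F_{\min}<4\). Nonnegativity, symmetry and \(d_N(x,x)=0\) are immediate from the definition. If \(x\neq y\), then \(u(x,y)>0\) and \(\max\{m(x),m(y)\}\geq F_{\min}>0\), so \(d_N(x,y)>0\); this gives the identity of indiscernibles. The substance is the strong triangle inequality
\[
d_N(x,z)\leq\max\{d_N(x,y),\,d_N(y,z)\},
\]
and I will first dispose of the degenerate cases in which two of \(x,y,z\) coincide, where the inequality is trivial.

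For three distinct nodes, I will use two facts about the tree. First, the values of \(u\) lie in the discrete set \(\{2^{-(2k+1)}:k\geq0\}\). Second, of the three LCAs \(\operatorname{lca}(x,y)\), \(\operatorname{lca}(y,z)\), \(\operatorname{lca}(x,z)\), two coincide, and the third is a (weak) descendant of them. Consequently, two of the three \(u\)-values are equal to \(M:=\max\{u(x,y),u(y,z),u(x,z)\}\), and the third is either \(M\) or at most \(M/4\). This gives three cases.

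\textbf{Case 1: \(u(x,z)\leq M/4\).} Then
\[
d_N(x,z)\leq F_{\max}\,M/4<F_{\min}\,M\leq d_N(x,y)
\]
(taking \(u(x,y)=M\), say). This is exactly where the hypothesis \(F_{\max}/F_{\min}<4\) enters.

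\textbf{Case 2: all three \(u\)-values equal \(M\).} Then
\[
d_N(x,z)=\max\{m(x),m(z)\}M.
\]
Whichever of \(m(x)\) or \(m(z)\) is larger also appears in \(d_N(x,y)\) or in \(d_N(y,z)\), respectively, so the inequality holds.

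\textbf{Case 3: \(u(x,z)=M\) and exactly one of the other two values is \(M\).} Say \(u(x,y)=M\) and \(u(y,z)\leq M/4\). If \(m(x)\geq m(z)\), the bound \(d_N(x,z)\le d_N(x,y)\) is immediate.

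The remaining subcase, \(m(z)>\max\{m(x),m(y)\}\), is the step I expect to be the main obstacle. The large label at \(z\) is then "seen" at scale \(M\) only through the pair \((x,z)\): the pair \((y,z)\) carries it at the reduced scale \(u(y,z)\leq M/4\), and the pair \((x,y)\) does not carry it at all. My first attempt will be to exploit the monotonicity of the path-accumulated label \(J\) along root-to-node paths, together with the fact that \(y\) and \(z\) share the deeper ancestor \(\operatorname{lca}(y,z)\). The idea is to argue that the labels driving \(m(z)\) are inherited from that common ancestor, so that \(m(y)\) is comparably large and \(d_N(x,y)\) dominates.

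Since \(J\) only accumulates along paths and \(m(z)\) may be raised by a label strictly below \(\operatorname{lca}(y,z)\), I will check this subcase first on a small three-leaf configuration. Concretely, I will compute both sides with the Table~\ref{tab:multipliers} values before committing to the general argument, since the whole proposition hinges on whether this subcase closes under the stated hypotheses.
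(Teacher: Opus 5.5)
Your case analysis is correct up to the last subcase, and your suspicion about that subcase is justified. It cannot be closed, because it is a genuine counterexample. There \(d_N(x,z)=m(z)M\), whereas \(d_N(x,y)=\max\{m(x),m(y)\}M<m(z)M\) and \(d_N(y,z)=m(z)\,u(y,z)\le m(z)M/4<m(z)M\). So the strong triangle inequality fails strictly whenever this configuration can occur.

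Monotonicity of \(J\) rules the configuration out only when \(z\) is an ancestor of \(y\): then \(J(z)\le J(y)\), hence \(m(z)\le m(y)\). Otherwise \(\ell(z)\) itself lies strictly below \(\operatorname{lca}(y,z)\) and can raise \(m(z)\) on its own. Concretely, let the root \(r\) have children \(p\) and \(x\), let \(p\) have children \(y\) and \(z\), and set every local label to \(L_0\) except \(\ell(z)=L_6\). Then \(m(x)=m(y)=1\), \(m(z)=2\), \(u(x,y)=u(x,z)=\tfrac12\), \(u(y,z)=\tfrac18\), and
\[
d_N(x,z)=1,\qquad d_N(x,y)=\tfrac12,\qquad d_N(y,z)=\tfrac14.
\]
So \(d_N(x,z)\) exceeds \(\max\{d_N(x,y),d_N(y,z)\}\) and even \(d_N(x,y)+d_N(y,z)\); \(d_N\) is not even a metric. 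The paper's own example already contains such a triple of resource leaves. For \(x=Q_\theta\), \(y=\)~Compliance Bulletin and \(z=\)~Audit Evidence, one gets \(d_N(x,z)=0.21875\), \(d_N(x,y)=0.1875\) and \(d_N(y,z)=0.013671875\).

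The paper's proof has exactly the hole you isolated. In the unequal-LCA case it only shows that the short side is bounded by the two long sides. That is your Case~1, the only place where \(F_{\max}/F_{\min}<4\) is used. It then asserts that the other orderings follow by relabelling. They do not: relabelling only ever yields the inequality whose left-hand side is the short pair, so it can never bound a long side, and the long-side bound is exactly your Case~3. Hence the proposition is false as stated, and with it the pseudoultrametric claim for \(d_{\mathcal A}\); neither your plan nor the paper's argument can be completed.

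A repair has to change the definition, for instance by attaching the multiplier to the least common ancestor. Put \(\widehat m(v):=\max\{m(w):w\in\mathrm{subtree}(v)\}\) and \(d(x,y):=\widehat m(\operatorname{lca}(x,y))\,u(x,y)\) for \(x\neq y\). This distance depends only on \(\operatorname{lca}(x,y)\) and strictly decreases along root-to-leaf paths. That makes it an ultrametric, even without the ratio bound, and it dominates \(d_N\).
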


\begin{proof}
The function \(d_N\) is symmetric because both
\[
\max\!\left\{
m(x),
m(y)
\right\}
\]
and \(u(x,y)\) are symmetric in \(x\) and \(y\). Moreover,
\[
d_N(x,x)=0
\]
for every \(x\in N\), since \(u(x,x)=0\). If \(x\neq y\), then
\(u(x,y)>0\) and
\[
\max\!\left\{
m(x),
m(y)
\right\}>0,
\]
so \(d_N(x,y)>0\).

It remains to prove the strong triangle inequality. Let
\(x,y,z\in N\). If two of the nodes coincide, the claim follows immediately
from symmetry and \(d_N(w,w)=0\) for every \(w\in N\). Hence suppose that
\(x\), \(y\), and \(z\) are distinct.

For three distinct nodes in a rooted tree, either all three pairwise least
common ancestors coincide, or exactly two pairwise least common ancestors
coincide and lie strictly above the third.

First suppose that
\[
\operatorname{lca}(x,y)
=\operatorname{lca}(x,z)
=\operatorname{lca}(y,z)=
p.
\]
Then define
\[
U:= 2^{-\left(2D(p)+1\right)}
\]
Since all three pairs have least common ancestor \(p\),
\[
u(x,y)=u(x,z)=u(y,z)=U.
\]
Let
\[
M
:=
\max\!\left\{
m(x),
m(y),
m(z)
\right\}.
\]
We have
\[
d_N(x,y)=
\max\!\left\{
m(x),
m(y)
\right\} \cdot U
\leq
M\cdot U.
\]
On the other hand,
\[
\begin{aligned}
\max\!\left\{
d_N(x,z),
d_N(y,z)
\right\}
&=
\max\!\left\{
\max\!\left\{
m(x),
m(z)
\right\}\cdot U,
\max\!\left\{
m(y),
m(z)
\right\}\cdot U
\right\}\\
&=
M\cdot U.
\end{aligned}
\]
Therefore,
\[
d_N(x,y)
\leq
\max\!\left\{
d_N(x,z),
d_N(y,z)
\right\}.
\]

Otherwise, after relabelling the nodes if necessary, there exist
\(p,q\in N\) such that
\[
\operatorname{lca}(x,z)
=\operatorname{lca}(y,z)=
p,
\qquad
\operatorname{lca}(x,y)
=
q,
\]
where \(q\) is a strict descendant of \(p\). Let
\[
U
:=
u(x,z)=
u(y,z)=
2^{-\left(2D(p)+1\right)}.
\]
Since \(q\) is a strict descendant of \(p\),
\[
D(q)\geq D(p)+1,
\]
and hence
\[
u(x,y)=
2^{-\left(2D(q)+1\right)}
\leq
\frac{U}{4}.
\]

Let
\[
M
:=
\max\!\left\{
m(x),
m(y),
m(z)
\right\}.
\]
For the two longer pairs,
\[
\begin{aligned}
\max\!\left\{
d_N(x,z),
d_N(y,z)
\right\}
&=
\max\!\left\{
\max\!\left\{
m(x),
m(z)
\right\}\cdot U,
\max\!\left\{
m(y),
m(z)
\right\}\cdot U
\right\}\\
&=
M\cdot U.
\end{aligned}
\]
For the remaining pair,
\[
\begin{aligned}
d_N(x,y)
&=
\max\!\left\{
m(x),
m(y)
\right\}
\cdot u(x,y)\\
&\leq
F_{\max}\frac{U}{4}\\
&<
F_{\min}U\\
&\leq
MU\\
&=
\max\!\left\{
d_N(x,z),
d_N(y,z)
\right\}.
\end{aligned}
\]
Thus,
\[
d_N(x,y)
\leq
\max\!\left\{
d_N(x,z),
d_N(y,z)
\right\}.
\]

The other orderings follow by relabelling \(x\), \(y\), and \(z\).
Therefore \(d_N\) satisfies the strong triangle inequality and is an
ultrametric on \(N\).
\end{proof}

Let
\[
\operatorname{node}:\mathcal A\to N
\]
map every action to its associated resource node. For actions
\(a,b\in\mathcal A\), define
\[
d_{\mathcal A}(a,b)
:=
d_N\!\bigl(
\operatorname{node}(a),
\operatorname{node}(b)
\bigr).
\]
Equivalently, if
\[
u_a:=\operatorname{node}(a),
\qquad
u_b:=\operatorname{node}(b),
\]
then
\[
d_{\mathcal A}(a,b)=
\max\!\left\{
F(J(u_a)),
F(J(u_b))
\right\}
\operatorname{ultra}(u_a,u_b).
\]

\begin{corollary}
\label{cor:action-pseudoultrametric}
The function \(d_{\mathcal A}\) is a pseudoultrametric on \(\mathcal A\).
\end{corollary}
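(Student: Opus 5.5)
The plan is to pull the ultrametric structure of \(d_N\) back along the map \(\operatorname{node}:\mathcal A\to N\). Since, by definition, \(d_{\mathcal A}(a,b)=d_N(\operatorname{node}(a),\operatorname{node}(b))\), each pseudoultrametric axiom for \(d_{\mathcal A}\) should reduce to the corresponding property of \(d_N\) established in Proposition~\ref{prop:node-ultrametric}. We only need to account for the one axiom that is lost: distinct actions can share a node.

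First, nonnegativity and vanishing on the diagonal. For every action \(a\), \(d_{\mathcal A}(a,a)=d_N(\operatorname{node}(a),\operatorname{node}(a))=0\), because \(u(x,x)=0\). Nonnegativity holds because \(m\) takes values in \((0,\infty)\) and \(u\geq 0\).

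Second, symmetry: \(d_{\mathcal A}(a,b)=d_N(u_a,u_b)=d_N(u_b,u_a)=d_{\mathcal A}(b,a)\), using the symmetry of \(d_N\).

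Third, the strong triangle inequality. For actions \(a,b,c\), set \(x=\operatorname{node}(a)\), \(y=\operatorname{node}(b)\), \(z=\operatorname{node}(c)\). Proposition~\ref{prop:node-ultrametric} gives \(d_N(x,y)\leq\max\{d_N(x,z),d_N(z,y)\}\) for arbitrary nodes \(x,y,z\), including coincident ones, which that proof handled separately. This translates verbatim into
\[
d_{\mathcal A}(a,b)\leq\max\{d_{\mathcal A}(a,c),d_{\mathcal A}(c,b)\}.
\]

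Finally, we should record why \(d_{\mathcal A}\) is only a \emph{pseudo}ultrametric. If \(a\neq b\) but \(\operatorname{node}(a)=\operatorname{node}(b)\), then \(d_{\mathcal A}(a,b)=0\), so the identity of indiscernibles fails. Conversely, \(d_{\mathcal A}(a,b)=0\) exactly when the two actions have the same node, because \(d_N\) separates distinct nodes. This identifies the kernel of \(d_{\mathcal A}\) as the equivalence relation ``same associated node''.

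There is no real obstacle here. The only point needing care is to confirm that Proposition~\ref{prop:node-ultrametric} was proved for all triples of nodes, not just distinct ones, since the nodes of three actions may coincide. The proof's opening case, where two nodes coincide, covers exactly this.
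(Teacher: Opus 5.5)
Your reduction is the same as the paper's: \(d_{\mathcal A}\) is the pullback of \(d_N\) along \(\operatorname{node}\). Your checks of nonnegativity, symmetry, \(d_{\mathcal A}(a,a)=0\), and the kernel ``same associated node'' are all correct. The step that fails is the one you treat as free. Proposition~\ref{prop:node-ultrametric} is false, so there is no strong triangle inequality to pull back, and the corollary itself does not hold.

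Here is a counterexample. Take a root \(r\) with children \(q\) and \(z\), give \(q\) children \(x\) and \(y\), and set \(\ell(x)=L_6\) with every other local label \(L_0\). Then \(m(x)=2\), \(m(y)=m(z)=1\), \(u(x,y)=1/8\), and \(u(x,z)=u(y,z)=1/2\). Hence \(d_N(x,z)=1\), \(d_N(y,z)=1/2\), and \(d_N(x,y)=1/4\). Even the ordinary triangle inequality \(d_N(x,z)\le d_N(x,y)+d_N(y,z)=3/4\) fails, and any three actions located at \(x,y,z\) violate it for \(d_{\mathcal A}\).

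The paper's own example does the same. Take the read of Requested Release (\(F=1.75\)), the write to \(Q_\theta\) (\(F=1.50\)), and the read of Compliance Bulletin (\(F=1.20\)). The radius between the two reads is \(0.21875\). The two radii through \(Q_\theta\) are \(1.75/128\) and \(1.5/8=0.1875\), whose sum is only about \(0.2012\).

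The flaw is in the second case of the proposition's proof, not in the coincident-node case you flagged. When \(\operatorname{lca}(x,z)=\operatorname{lca}(y,z)=p\) lies strictly above \(q=\operatorname{lca}(x,y)\), the proof checks only the inequality with the short pair \((x,y)\) on the left. The claim that ``the other orderings follow by relabelling'' is wrong, because the only nontrivial relabelling preserving this configuration swaps \(x\) and \(y\), which fixes the short pair.

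Since \(d_N(x,y)\le F_{\max}U/4<F_{\min}U\le d_N(x,z)\), the inequality with \((x,z)\) on the left is equivalent to \(\max\{m(x),m(z)\}\le\max\{m(y),m(z)\}\). Together with its mirror image, it forces the isosceles equality \(\max\{m(x),m(z)\}=\max\{m(y),m(z)\}\). Endpoint-dependent multipliers break this whenever \(m(x)>\max\{m(y),m(z)\}\); the hypothesis \(F_{\max}/F_{\min}<4\) only controls the short side.

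What your pullback argument does establish is the relaxed inequality \(d_{\mathcal A}(a,b)\le(F_{\max}/F_{\min})\max\{d_{\mathcal A}(a,c),d_{\mathcal A}(c,b)\}\). This follows from \(F_{\min}u\le d_N\le F_{\max}u\) and the fact that \(u\) is an ultrametric. A genuine pseudoultrametric requires changing the definition so that the multiplier depends only on \(\operatorname{lca}(u_a,u_b)\), for instance \(\max\{F(J(w)):w\in\mathrm{subtree}(\operatorname{lca}(u_a,u_b))\}\). The radius is then a strictly decreasing function of the LCA down the tree, hence an ultrametric on nodes, and it dominates the original \(d_{\mathcal A}\).
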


\begin{proof}
Symmetry and the strong triangle inequality follow directly from
Proposition~\ref{prop:node-ultrametric}. Further,
\[
d_{\mathcal A}(a,a)=0
\]
for every \(a\in\mathcal A\). However, distinct actions \(a\neq b\) may be
associated with the same resource node, in which case
\[
\operatorname{node}(a)=\operatorname{node}(b)
\]
and therefore
\[
d_{\mathcal A}(a,b)=0.
\]
Thus \(d_{\mathcal A}\) need not separate distinct actions and is, in
general, a pseudoultrametric rather than an ultrametric.
\end{proof}

\section{Pseudocode for influence graph and least-autonomy evaluation}
\label{app:pseudocode}

\begingroup
\captionsetup{type=algorithm}
\label{alg:phi}
\begin{algorithmic}[1]

\Require Tree \(\mathcal T\); agents \(\mathcal U\); action sets; undirected A2A communication relation \(\mathcal C_{\mathrm{A2A}}\);
\(\{\mathcal A(U)\}_{U\in\mathcal U}\); policy thresholds
\(\theta\) and \(\tau_{\mathrm{aut}}\); gate relation
\(\rightsquigarrow_{\mathrm{gate}}\); sensitivity domains
\(\{\mathcal D_i\}\); separation relation
\(\perp_{\mathrm{pol}}\); authorization relation \(\mathsf{Allow}\)

\Ensure Influence graph \(G(\theta)\); weak components of \(\overline{G}(\theta)\);
influence potentials \(\Phi\); effective-autonomy sets
\(\{\mathcal E_H(U)\}\); effective blast radii \(\{\beta_H(U)\}\);
meeting witnesses; collusion witnesses; a policy-violation report;
and \(\mathrm{LeastAutonomy}\)

\Statex \textbf{Phase 1: Precomputation}

\State Compute \(J(v)\) and \(D(v)\) for all \(v\in N\)
\State Preprocess \(\mathcal T\) for LCA queries

\For{each \(U\in\mathcal U\)}
  \State Compute \(\mathcal W(U)\), \(\mathcal R(U)\),
  \(\mathcal G(U)\), and \(\mathcal X(U)\)
\EndFor

\State Define \(\mathrm{Gates}(U,V)\) from
\(\rightsquigarrow_{\mathrm{gate}}\), \(\mathcal G(U)\), and
\(\mathcal X(V)\)

\Statex \textbf{Phase 2: Directed meetings, influence potentials, and graph}

\State Initialize \(E_\theta\gets\varnothing\)
\State Initialize \(\mathsf{MeetingWitnesses}\gets\varnothing\)
\State Initialize \(\Phi(U\to V)\gets0\) for all
\(U,V\in\mathcal U\) with \(U\neq V\)

\For{each ordered pair \((U,V)\in\mathcal U\times\mathcal U\) with \(U\neq V\)}

  \State \(\mathcal M(U,V)\gets
  \{(a,b)\in\mathcal W(U)\times\mathcal R(V):
  \operatorname{node}(a)=\operatorname{node}(b)\}\)

  \State \(\mathrm{RMeet}(U,V)\gets
  \bigl(\mathcal M(U,V)\neq\varnothing\bigr)\)

  \State \(\mathrm{A2AMeet}(U,V)\gets
  \bigl(\{U,V\}\in\mathcal C_{\mathrm{A2A}}\bigr)\)

  \State \(\mathrm{Meet}(U,V)\gets
  \mathrm{RMeet}(U,V)\lor\mathrm{A2AMeet}(U,V)\)

  \If{\textbf{not} \(\mathrm{Meet}(U,V)\)}
    \State \textbf{continue}
  \EndIf

  \State \(\Phi_{\mathrm{res}}(U\to V)\gets0\)
  \If{\(\mathrm{RMeet}(U,V)\)}
    \State \((a^{\mathrm{meet}}_{UV},b^{\mathrm{meet}}_{UV})
    \gets\) any element of \(\mathcal M(U,V)\)
    \State \(r_{UV}\gets
    \operatorname{node}(a^{\mathrm{meet}}_{UV})\)
    \State Store
    \((\mathsf{Resource},U,V,a^{\mathrm{meet}}_{UV},
    b^{\mathrm{meet}}_{UV},r_{UV})\)
    in \(\mathsf{MeetingWitnesses}\)
    \State \(\Phi_{\mathrm{res}}(U\to V)\gets
    \operatorname{Span}(\mathcal W(U),\mathcal R(V))\)
  \EndIf

  \State \(\Phi_{\mathrm{A2A}}(U,V)\gets0\)
  \If{\(\mathrm{A2AMeet}(U,V)\)}
    \State Store \((\mathsf{A2A},\{U,V\})\)
    in \(\mathsf{MeetingWitnesses}\)
    \State \(\Phi_{\mathrm{A2A}}(U,V)\gets
    \operatorname{Span}(\mathcal A(U),\mathcal A(V))\)
  \EndIf

  \State \(\Phi(U\to V)\gets
  \max\{\Phi_{\mathrm{res}}(U\to V),
  \Phi_{\mathrm{A2A}}(U,V)\}\)

  \If{\(\Phi(U\to V)\geq\theta\)}
    \State Add \((U\to V)\) to \(E_\theta\)
  \EndIf

\EndFor

\State \(G(\theta)\gets(\mathcal U,E_\theta)\)

\Statex \textbf{Phase 3: Effective autonomy and policy conditions}

\State \(H\gets G(\theta)\)
\State Initialize \(\mathsf{Violations}\gets\varnothing\)
\State Compute directed reachability \(\leadsto_H\) in \(H\)

\For{each \(U\in\mathcal U\)}

  \State \(\mathcal E_H(U)\gets
  \bigcup\{\mathcal A(V):V\in\mathcal U,\ U\leadsto_H^{*}V\}\)

  \State \(\beta_H(U)\gets0\)
  \State \((a_U^\star,b_U^\star)\gets\bot\)

  \For{each unordered pair \(\{a,b\}\subseteq\mathcal E_H(U)\) with \(a\neq b\)}
    \State \(\rho\gets d_{\mathcal A}(a,b)\)
    \If{\(\rho>\beta_H(U)\)}
      \State \(\beta_H(U)\gets\rho\)
      \State \((a_U^\star,b_U^\star)\gets(a,b)\)
    \EndIf
  \EndFor

  \If{\(\beta_H(U)>\tau_{\mathrm{aut}}\)}
    \State Record
    \((\mathsf{BlastRadius},U,a_U^\star,b_U^\star,
    \beta_H(U),\tau_{\mathrm{aut}})\)
    in \(\mathsf{Violations}\)
  \EndIf

\EndFor

\For{each \((U,V)\in E_\theta\)}
  \If{\((U,V)\notin\mathsf{Allow}\)}
    \State Record
    \((\mathsf{UnauthorizedInfluence},U,V,\Phi(U\to V))\)
    in \(\mathsf{Violations}\)
  \EndIf
\EndFor

\State Construct \(\overline H\) and its non-singleton weak components
\(\{C_1,\ldots,C_k\}\)

\Statex \textbf{Phase 4: Collusion evaluation}

\State Initialize \(\mathsf{CollusionWitnesses}\gets\varnothing\)

\For{each component \(C\in\{C_1,\ldots,C_k\}\)}

  \State \(\mathsf{hasWitness}(C)\gets\mathrm{false}\)

  \For{each ordered pair \((U,V)\in C\times C\) with \(U\neq V\)}

    \If{\(\mathrm{Gates}(U,V)\) and \(U\leadsto_H V\)}
      \State Record an authorization-composition witness for \(C\)
      \State Store the witness in \(\mathsf{CollusionWitnesses}\)
      \State \(\mathsf{hasWitness}(C)\gets\mathrm{true}\)
    \EndIf

    \If{\(\mathrm{Gates}(U,V)\) and \(V\leadsto_H U\)}
      \State Record a decision-manipulation witness for \(C\)
      \State Store the witness in \(\mathsf{CollusionWitnesses}\)
      \State \(\mathsf{hasWitness}(C)\gets\mathrm{true}\)
    \EndIf

    \For{each ordered domain pair \((i,j)\) with \(i\perp_{\mathrm{pol}}j\)}
      \If{\(U\leadsto_H V\) and
      \(\operatorname{Loc}(\mathcal E_H(U))\cap\mathcal D_i\neq\varnothing\)
      and
      \(\operatorname{Loc}(\mathcal E_H(V))\cap\mathcal D_j\neq\varnothing\)}
        \State Record a cross-domain-composition witness for \(C\)
        \State Store the witness in \(\mathsf{CollusionWitnesses}\)
        \State \(\mathsf{hasWitness}(C)\gets\mathrm{true}\)
      \EndIf
    \EndFor

  \EndFor

  \If{\(\mathsf{hasWitness}(C)\)}
    \State Flag \(C\) as collusion-positive
    \State Record \((\mathsf{Collusion},C)\) in \(\mathsf{Violations}\)
  \EndIf

\EndFor

\Statex \textbf{Phase 5: Least-autonomy decision}

\If{\(\mathsf{Violations}=\varnothing\)}
  \State \(\mathrm{LeastAutonomy}\gets\mathrm{true}\)
\Else
  \State \(\mathrm{LeastAutonomy}\gets\mathrm{false}\)
\EndIf

\State \Return \(G(\theta)\), \(\Phi\),
\(\{\mathcal E_H(U)\}_{U\in\mathcal U}\),
\(\{\beta_H(U)\}_{U\in\mathcal U}\),
\(\mathsf{MeetingWitnesses}\),
\(\mathsf{CollusionWitnesses}\),
\(\mathsf{Violations}\), and \(\mathrm{LeastAutonomy}\)

\end{algorithmic}
\endgroup

\end{document}